\newtheorem{thm}{Theorem}
\newtheorem{defn}{Definition}
\newtheorem{lemma}{Lemma}
\newtheorem{pro}{Proposition}
\newtheorem{rk}{Remark}
\title{Ising Models with Hidden Markov Structure:\\ Applications to Probabilistic Inference in Machine Learning}
\date{}
\begin{document}
	
	\maketitle
	
	% Author block with affiliations directly underneath names
	\begin{center}
		\textbf{F. Herrera} \\
		Department of Computer Science and Artificial Intelligence, University of Granada, E-18071 Granada, Spain \\
		Email: \texttt{herrera@decsai.ugr.es}
		\vspace{1em}
		
		\textbf{U. A. Rozikov} \\
		 V.I. Romanovskiy Institute of Mathematics, Uzbekistan Academy of Sciences, 9, Universitet str., 100174, Tashkent, Uzbekistan; \\
		 National University of Uzbekistan, 4, Universitet str., 100174, Tashkent, Uzbekistan; \\
		Karshi State University, 17, Kuchabag str., 180119, Karshi, Uzbekistan. \\
		Email: \texttt{rozikovu@yandex.ru} \\
		\textbf{(Corresponding author)}
		\vspace{1em}
		
		\textbf{M. V. Velasco} \\
		Departamento de Análisis Matemático, Facultad de Ciencias, Universidad de Granada, 18071 Granada, Spain \\
		Email: \texttt{vvelasco@ugr.es}
	\end{center}
	
	\vspace{1em}
	
	\begin{abstract}
In this paper, we investigate  tree-indexed Markov chains (Gibbs measures) defined by a Hamiltonian that couples two Ising layers: hidden spins \(s(x) \in \{\pm 1\}\) and observed spins \(\sigma(x) \in \{\pm 1\}\) on a Cayley tree. The Hamiltonian incorporates Ising interactions within each layer and site-wise emission couplings between layers, extending hidden Markov models to a bilayer Markov random field. 
	 Specifically, we explore translation-invariant Gibbs measures (TIGM) of this Hamiltonian on Cayley trees.
		
		Under certain explicit conditions on the model's parameters, we demonstrate that there can be up to three distinct TIGMs. Each of these measures represents an equilibrium state of the spin system. These measures provide a structured approach to inference on hierarchical data in machine learning. They have practical applications in tasks such as denoising, weakly supervised learning, and anomaly detection. The Cayley tree structure is particularly advantageous for exact inference due to its tractability.
	\end{abstract}
	
	\vspace{1em}
	\noindent \textbf{Mathematics Subject Classifications (2010):} 82B20, 62C10, 68T07, 60J10.\\
	\noindent \textbf{Keywords:} Configuration, Ising model, Hidden Markov model, Gibbs measures, Cayley tree, machine learning.
	
	\section{Introduction}

	The history of applying statistical physics methods in machine learning spans decades and continues to grow (e.g., \cite{Ar}, \cite{Bah}, \cite{Beh}, \cite{Gem}, \cite{Krz}, \cite{Krza}, \cite{Ni}, \cite{Le}), with new insights constantly emerging. The intersection of statistical mechanics and machine learning has led to the development of powerful algorithms for inference, optimization, and understanding complex data. As machine learning models grow in complexity, the use of statistical physics concepts will likely play an increasingly important role in both theoretical developments and practical applications.
	 	 
	 In this paper, we use statistical physics methods (energy based learning) to explore a Hamiltonian model that combines the Ising interaction between hidden binary spins with a data-dependent term that links these hidden spins to observed variables. This hybrid model enables us to examine probabilistic inference within the context of hierarchical data in machine learning applications. Specifically, we study translation-invariant Gibbs measures (TIGMs) (see \cite{BR}, \cite{Ge}, \cite{Qa}, \cite{QaM}, \cite{Ro}, \cite{Rbp} for the theory of Gibbs measures and its applications) of this Hamiltonian on Cayley trees, a class of graphs that allow for efficient and tractable exact inference.
	 
	 The Ising model, a well-established framework in statistical physics, models spin configurations where spins interact with their neighbors. The extension of this model to include hidden variables and observed noisy measurements makes it particularly applicable to problems in machine learning, such as denoising, weakly supervised learning, and anomaly detection. By considering both the Ising interactions and the hidden Markov structure, we derive a framework that allows for inference over hierarchical structures where data are only partially observed or are corrupted by noise.
	 
	 The motivations behind this study are twofold. First, hierarchical models have gained prominence in machine learning due to their ability to represent complex, multi-level structures in data. A hidden Markov models (HMMs) have been widely used to model such hierarchical systems, where the data are assumed to be generated by underlying hidden states (see \cite{Gh}, \cite{We} and the references therein). However, inference in these models, especially when dealing with spatial or temporal correlations (as seen in image or sequence data), often becomes intractable. By introducing the Ising interactions between the hidden states, we aim to explore more effective ways to model these dependencies while maintaining computational feasibility.
	 
	 Second, probabilistic graphical models have shown tremendous success in machine learning, particularly in areas such as generative modeling, denoising, and anomaly detection (see \cite{Bis}, \cite{Pe}, \cite{Wa}). However, a key challenge remains: how to efficiently perform inference when the underlying data are noisy or missing. The incorporation of a hidden Markov structure within the Ising framework provides a principled way to handle noisy or incomplete observations, as the model's energy function captures both the relationship between hidden and observed variables as well as their dependencies. 
	 
	 The key innovation in this work is the use of Cayley trees, a simple yet powerful structure for exact inference, which is often intractable on general graphs.
	 The use of Cayley trees is particularly significant because it simplifies the computational complexity associated with inference. The tree structure allows for exact inference, as opposed to the approximate methods typically required on general graphs. This advantage is particularly important when dealing with large datasets, where approximate methods can lead to inaccuracies or prohibitively high computational costs.

	 The rest of the paper is organized as follows: In Section 2, we introduce the necessary preliminaries on Cayley trees and define the model. Section 3 discusses the Hamiltonian that governs the system, along with its interpretation and relation to the context of HMMs and machine learning. In Section 4, we derive the Gibbs measures and analyze their properties. Finally, Section 5 presents applications to real-world machine learning problems, illustrating how the model can be used for denoising, weakly supervised learning, and anomaly detection.

	\section{Preliminaries}
	
	{\bf Cayley tree.}
		A Cayley tree $\Gamma^k=(V, L)$ (where $V$ is the set of vertices and $L$ is the set of edges $\langle x, y\rangle$, $x,y\in V$) with branching factor $ k\geq 1 $ is a connected infinite graph, every vertex of which has exactly $ k+1 $ neighbors.  The graph $\Gamma^k$ is acyclic, meaning it has no loops or cycles (see \cite[Chapter 1]{Ro}).  
		
		Fix a vertex $x^0\in V$, interpreted as the \emph{root} of
		the tree. We say that $y\in V$ is a \emph{direct successor}  of $x\in
		V$ if $x$ is the penultimate vertex on the unique path leading from
		the root $x^0$ to the vertex $y$; that is,
		$d(x^0,y)=d(x^0,x)+1$ and $d(x,y)=1$. The set of all direct
		successors of $x\in V$ is denoted by $S(x)$.
		
		For a fixed $x^0\in V$ we set $ W_n = \ \{x\in V\ \ | \ \ d (x,
		x^0) =n \}, $
		\begin{equation}\label{lp*}
			V_n = \ \{x\in V\ \ | \ \ d (x, x^0) \leq n \},\ \ L_n = \ \{l =
			\langle x, y\rangle \in L \  | \ x, y \in V_n \}.
		\end{equation}
		For $x\in W_{n}$ the set $S(x)$ then has the form
		\begin{equation}\label{sx}
			S(x)=\{y\in W_{n+1}: \langle x, y\rangle\in L\}.
		\end{equation}
	
	{\bf Hamiltonian for an HMM: }	Hidden Markov model (HMM) can be described in statistical mechanics terms using a Hamiltonian that captures the joint distribution of hidden states and observations (see \cite{Gh}, \cite{We}). In this context, the hidden states $s(x)$ play the role of spins, and the observations $\sigma(x)$ are the emitted (visible) symbols or measurements.
		
	Let us define a general Hamiltonian for the HMM on a graph (e.g., a Cayley tree or lattice), where
	 $x \in V$ an vertex, $s(x) \in \mathcal{S}$ are hidden states (e.g., spins),  $\sigma(x) \in \Sigma$ are observed symbols.
		
		Then the Hamiltonian is:
	\begin{equation}\label{hhm}		H(s, \sigma) = - \sum_{\langle x, y \rangle} E(s(x), s(y)) - \sum_{x} p(s(x)\mid \sigma(x)),
	\end{equation}
	here interaction term (hidden states):
		$
		-\sum_{\langle x, y \rangle} E(s(x), s(y))
		$
	represents the Markov property: hidden states interact locally (e.g., adjacent in a graph).
 $E(s(x), s(y))$ is the interaction energy between neighboring hidden states.
		
Emission term (observation likelihood): $	-\sum_{x} p(s(x)\mid \sigma(x))$.  This captures the probability of emitting observation $\sigma(x)$ given the hidden state $s(x)$. Here
 $p(s\mid \sigma) = \log P(\sigma \mid s)$ is a log-likelihood function of the emission probability.

{\bf Our model}:				
	
In this paper we consider Ising spins $s(x)\in I=\{-1,1\}$ (i.e. $\mathcal S=\Sigma=I$) which are assigned to the vertices of a Cayley tree. 
	
A configuration $s$ on $V$ is then defined
as a function $x\in V\mapsto s(x)\in I$;
the set of all configurations is $\Omega:=I^V$.
		
We consider a hidden configuration $s\in \Omega$ and an observed configuration $\sigma\in \Omega$ and formulate a Hamiltonian that depends on both configurations of the Ising model.
		
	Now we  write a Hamiltonian that describes both the Ising model with interactions between spins and the relationship between the spins and the observations.
		
		The considered  Hamiltonian, $ H(s, \sigma)$, which   
		defines the energy of a given configuration pair $ (s, \sigma)$, incorporating both the Ising interactions and the relationship between the hidden states $s$ and the observations $\sigma$ given as follows:
		
	\begin{equation}\label{Ha}	
		H(s, \sigma) = -J \sum_{\langle x,y \rangle} (s(x)s(y)-\sigma(x)\sigma(y)) - \sum_{x\in V} p(\sigma(x) | s(x))
		\end{equation}
	Here, $J\in \mathbb R$. 
	\begin{rk} 		
		In a more general setting, the Hamiltonian (\ref{Ha}) can be written as
		\begin{equation}\label{Hag}
			H(s, \sigma) = -J_1 \sum_{\langle x,y \rangle} s(x)s(y) + J_2 \sum_{\langle x,y \rangle} \sigma(x)\sigma(y) - \sum_{x \in V} p(\sigma(x) \mid s(x)),
		\end{equation}
		where $J_1, J_2 \in \mathbb{R}$. In this case, one can also prove an analogue of Theorem 1 (see below); however, the corresponding system of equations becomes more complicated due to the presence of an additional parameter. For this reason, we restrict ourselves in this paper to the simpler case $J_1 = J_2 = J$. As will be seen below, even in this simplified case, the analysis of the resulting system of equations is nontrivial.
			
	\end{rk}
	
		\begin{rk} Comparing  Hamiltonians (\ref{hhm}) and (\ref{Ha}) one can see that (\ref{Ha}) has (additional) the antiferromagnetic coupling term $J \sum_{\langle x,y \rangle} \sigma(x)\sigma(y)$, coupling the observable $\sigma$ spins.  
	\begin{itemize}
	\item[1.] The first part, $ s(x)s(y)$, as in (\ref{hhm}), represents the natural interaction between spins on neighbors $ x$ and $ y$ in the Ising model (favoring alignment of spins with coupling strength $J$). The second part, $\sigma(x)\sigma(y)$, introduces a discrepancy between the observed configurations, which could indicate how much the observations diverge from the hidden states. The difference $(s(x)s(y) - \sigma(x)\sigma(y))$ essentially {\bf penalizes mismatches} between the hidden spin configuration and the observed configuration in the Ising framework.\\

\item[2.]	The second term in (\ref{Ha}) is similar to second term of (\ref{hhm}) reflects the probabilistic nature of the observations. It models how likely the observed spin $ \sigma(x)$ is, given the true (hidden) spin $s(x)$. The probability $ p(\sigma(x) | s(x))$ can be thought of as a noise model for the observation, where different types of distributions (e.g., Gaussian, Bernoulli) can be chosen depending on the nature of the observations.
	
\item[3.] 
The Hamiltonian (\ref{Ha}) does {\bf not} define an HMM because it includes a direct interaction term for observations ($\sigma(x)\sigma(y)$), which violates the fundamental HMM property that observations should be conditionally independent given the hidden states. Instead, (\ref{Ha}) describes a more general coupled system, deviating from the standard HMM framework defined by (\ref{hhm}). While standard HMMs enforce conditional independence of observations given hidden states, (\ref{Ha}) introduces observation correlations, generalizing to a bilayer Markov chain indexed by the tree. Therefore the model constructed by (\ref{Ha}) can be called as "HMM with Ising-structured observations". 
	\end{itemize}
	\end{rk}

\begin{rk}  In \cite{En}, a variant of (\ref{Ha})—lacking the direct observation coupling term $J \sum_{\langle x,y \rangle} \sigma(x)\sigma(y)$ —was analyzed. Unlike our approach, \cite{En} assumes no interactions between observed spins, instead modeling noisy observations of coupled hidden spins via site-wise (time-dependent) couplings between $\sigma(x)$ and $s(x)$. The authors studied homogeneous Gibbs measures (HGMs) on a Cayley tree under infinite-temperature Glauber dynamics, demonstrating distinct non-Gibbsian behaviors in the intermediate state (e.g., free-boundary Gibbs state) versus extremal states ($\pm$). While \cite{En}'s model holds relevance for machine learning (via HMMs), our work focuses exclusively on characterizing HGMs for Hamiltonian (\ref{Ha}) without Glauber dynamical evolution. 
	 
{\bf Motivation for adding the second term to Hamiltonian (\ref{Ha}):}	The inclusion of the antiferromagnetic term in Hamiltonian (\ref{Ha}) serves not only to facilitate the study of translation-invariant Gibbs measures but also offers significant advantages for machine learning applications. Specifically, we propose using the expression $(s(x)s(y) - \sigma(x)\sigma(y))$—which represents the difference between the Ising model energies of the hidden and observed configurations—as a {\bf loss function} for machine learning tasks (see \cite{Le} for the representation of a Hamiltonian as an energy-loss function). This formulation is motivated by its ability to directly quantify the discrepancy in pairwise spin interactions, a key feature of Ising-type models. Minimizing this loss function encourages the hidden configuration to replicate the statistical behavior of the observed one in terms of energy, thereby leading to more interpretable and physically meaningful learning outcomes—especially in settings where spin correlations are of fundamental importance.
\end{rk}
	
	{\bf Interpretation in Machine Learning:}
		
	- Energy minimization:  
	The Hamiltonian $ H(s, \sigma)$ essentially represents the total energy of a configuration. In machine learning, the goal would typically be to minimize this energy function (or equivalently maximize the likelihood of the observed data given the hidden states). 
	
	- Inference:  
	Given observed data $\sigma$, the task might be to infer the most likely hidden configuration $ s $ that generated the observations, which is a typical problem in HMMs and probabilistic graphical models. The Hamiltonian provides a framework for this by considering both the structure of the hidden system and the noise in the observations.
	
 This setup aligns with standard approaches in machine learning, where the goal is often to infer hidden structures (like the spins $s$) from noisy or indirect observations (like $\sigma$), with the Hamiltonian serving as the energy function to minimize during the inference process. Below we construct some Gibbs measures of the Hamiltonian (\ref{Ha}) and then use these measures to solve above mentioned problems of Machine learning.  
	
		\section{The compatibility of measures}
	
	Define a finite-dimensional distribution of a probability measure $\mu$ in the volume $V_n$ as
	\begin{equation}\label{p*}
		\mu_n(s_n,\sigma_n)=Z_n^{-1}\exp\left\{-\beta H_n(s_n,\sigma_n)+\sum\limits_{x\in W_n}h_{s(x),\sigma(x),x}\right\},
	\end{equation}
	where $\beta=1/T$, and  $T>0$ is the temperature,  $Z_n^{-1}$ is the normalizing factor,
	\begin{equation}\label{hx}	\{h_x=(h_{-1,-1,x}, h_{-1,1,x}, h_{1,-1,x}, h_{1,1,x} )\in \mathbb R^{4}, x\in V\}
	\end{equation}
	is a collection of vectors and
	$$H_n(s_n, \sigma_n)=-J\sum\limits_{\langle x,y\rangle\in L_n}
(s(x)s(y)-\sigma(x)\sigma(y)) - \sum_{x\in V_n} p(\sigma(x) | s(x)).$$
	
\begin{defn}	We say that the probability distributions (\ref{p*}) are compatible if for all
	$n\geq 1$ and  $s_{n-1}\in I^{V_{n-1}}$, 
	$\sigma_{n-1}\in I^{V_{n-1}}$:
	\begin{equation}\label{p**}
		\sum\limits_{(w_n, \omega_n)\in I^{W_n}\times I^{W_n}}\mu_n(s_{n-1}\vee w_n, \sigma_{n-1}\vee \omega_n)=\mu_{n-1}(s_{n-1},\sigma_{n-1}).
	\end{equation}
	Here symbol $\vee$ denotes the concatenation (union) of the configurations.
	\end{defn} 

	In this case, by Kolmogorov's theorem (\cite[p.251]{FV}) there exists a unique measure $\mu$ on $I^V\times I^V$ such that,
	for all $n$ and  $s_n, \sigma_n \in I^{V_n}$,
	$$\mu(\{(s, \sigma)|_{V_n}=(s_n,\sigma_n)\})=\mu_n(s_n, \sigma_n).$$
	Such a measure is called a {\it splitting Gibbs measure} (SGM) corresponding to the Hamiltonian (\ref{Ha}) and vector-valued functions (\ref{hx}).
	\begin{rk}
In this paper, we exclusively study splitting Gibbs measures and thus omit the qualifier `splitting' for brevity.
	\end{rk}
	The following theorem describes conditions on $h_x$ guaranteeing compatibility of $\mu_n(s_n,\sigma_n)$.
	
	\begin{thm}\label{ep} Probability distributions
		$\mu_n(s_n,\sigma_n)$, $n\in \mathbb N$, in
		(\ref{p*}) are compatible iff for any $x\in V\setminus \{x^0\}$
		the following equation holds:
		\begin{equation}\label{p***}
			z_{-1, 1, x}=\prod_{y\in S(x)}
			{\theta+ az_{-1,1,y}+bz_{1,-1,y}+\theta^{-1}cz_{1,1,y}\over 1+ \theta az_{-1,1,y}+\theta^{-1} bz_{1,-1,y}+cz_{1,1,y}},
		\end{equation}
		\begin{equation}\label{ou}	
				z_{1, -1, x}=\prod_{y\in S(x)}
			{\theta^{-1}+ az_{-1,1,y}+bz_{1,-1,y}+\theta cz_{1,1,y}\over 1+ \theta az_{-1,1,y}+\theta^{-1} bz_{1,-1,y}+cz_{1,1,y}},\end{equation}
			\begin{equation}\label{zu}	
			z_{1, 1, x}=\prod_{y\in S(x)}
			{1+\theta^{-1} az_{-1,1,y}+\theta bz_{1,-1,y}+ cz_{1,1,y}\over 1+ \theta az_{-1,1,y}+\theta^{-1} bz_{1,-1,y}+cz_{1,1,y}},\end{equation}
		where
		\begin{equation}\label{zt}		
		 z_{\epsilon, \delta, x}=\exp(h_{\epsilon, \delta, x}-h_{-1, -1, x}), \, 	\epsilon, \delta=-1,1, \ \
			\theta=\exp(2J\beta),\end{equation}
	$$a=\exp[\beta(p(-1|1)-p(-1|-1))],$$
	$$b=\exp[\beta(p(1|-1)-p(-1|-1))],$$  $$c=\exp[\beta(p(1|1)-p(-1|-1))],$$
		and $S(x)$ is the set of direct successors of $x$.
	\end{thm}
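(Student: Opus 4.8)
The plan is to follow the standard consistency argument for splitting Gibbs measures on a Cayley tree (as in \cite{Ro}), adapted to the bilayer state space $I\times I$ with its four local states $(-1,-1),(-1,1),(1,-1),(1,1)$. The compatibility condition \eqref{p**} is, for each $n$, a linear constraint relating $\mu_n$ on $V_n$ to $\mu_{n-1}$ on $V_{n-1}$. Because the Hamiltonian is a sum over edges and over sites, and $V_n$ is obtained from $V_{n-1}$ by attaching the sets $S(x)$ for $x\in W_{n-1}$, the ratio $\mu_n/\mu_{n-1}$ factorizes over $x\in W_{n-1}$; so it suffices to analyze, for a single vertex $x\in W_{n-1}$ with successors $S(x)$, the partial sum
\[
\sum_{(w,\omega)\in I^{S(x)}\times I^{S(x)}}\exp\Bigl\{\beta J\!\!\sum_{y\in S(x)}\bigl(s(x)w(y)-\sigma(x)\omega(y)\bigr)+\sum_{y\in S(x)}\bigl(\beta p(\omega(y)\mid w(y))+h_{w(y),\omega(y),y}\bigr)\Bigr\}.
\]
First I would split the exponential of a sum over $S(x)$ into a product over $y\in S(x)$ and, for each $y$, sum the four terms $(w(y),\omega(y))\in\{-1,1\}^2$. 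This produces, for each fixed $(s(x),\sigma(x))$, a product over $y\in S(x)$ of a sum of four exponentials; dividing through by the normalization one naturally works with the ratios $z_{\epsilon,\delta,x}=\exp(h_{\epsilon,\delta,x}-h_{-1,-1,x})$ defined in \eqref{zt}, which kills the (irrelevant) common factor coming from the $(-1,-1)$ reference state.

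Next I would carry out the bookkeeping of the Boltzmann weights. For a successor $y$ with local state $(w(y),\omega(y))$, the edge $\langle x,y\rangle$ contributes $\exp\{\beta J(s(x)w(y)-\sigma(x)\omega(y))\}$ and the site $y$ contributes $\exp\{\beta p(\omega(y)\mid w(y))\}$. Writing $\theta=\exp(2\beta J)$ and pulling out the factor $\exp\{\beta(-s(x)-\sigma(x)\cdot(-1))\cdot J\}\cdot\exp\{\beta p(-1\mid -1)\}$... more cleanly: normalize the four emission weights by $p(-1\mid-1)$ to get $a,b,c$ as in the statement, and normalize the edge weights so that the $(w(y),\omega(y))=(-1,-1)$ term in each $y$-sum becomes $1$ in the denominator. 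Doing this for the two choices that matter, one checks directly that the numerator/denominator ratios in \eqref{p***}, \eqref{ou}, \eqref{zu} are exactly what emerges: the denominator $1+\theta a z_{-1,1,y}+\theta^{-1}bz_{1,-1,y}+cz_{1,1,y}$ is the $y$-sum when $(s(x),\sigma(x))=(-1,-1)$ (the reference), while the three numerators are the $y$-sums for $(s(x),\sigma(x))=(-1,1)$, $(1,-1)$, $(1,1)$ respectively, the shifts in the powers of $\theta$ tracking the sign flips of $s(x)$ and $\sigma(x)$ in the edge term $s(x)w(y)-\sigma(x)\omega(y)$. Matching these against the requirement that $\mu_n$ summed down equals $\mu_{n-1}$ forces, after taking the ratio with the reference state $(-1,-1)$ at $x$, precisely the system \eqref{p***}–\eqref{zu}. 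For the converse, I would observe that if the $z_{\epsilon,\delta,x}$ satisfy this system, one defines $Z_n$ recursively so that \eqref{p**} holds identically, giving compatibility.

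The main obstacle is purely organizational rather than conceptual: one must be scrupulous about which of the four states is used as the reference both at $x$ (to pass from the $h$'s to the $z$'s) and at each $y$ (to form the numerator/denominator ratios), and about how the four sign combinations of $(s(x)w(y),\sigma(x)\omega(y))$ distribute the factors $\theta^{\pm1}$ among the three numerators — an asymmetry between $\eqref{p***}$ and $\eqref{ou}$ (where the $\theta$ attaches to $z_{1,1,y}$ with opposite exponent) that is easy to get backwards. I would double-check the exponents by testing the degenerate case $J=0$ (so $\theta=1$), where all three right-hand sides must collapse to products of $(1+az_{-1,1,y}+bz_{1,-1,y}+cz_{1,1,y})$-type ratios and the system decouples into the expected recursion for a model with no inter-layer edge interaction, and the case of trivial emissions $a=b=c=1$, which should reduce to two independent Ising recursions on the hidden and observed layers. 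Once the weight bookkeeping is pinned down, assembling the product over $y\in S(x)$ and then over $x\in W_{n-1}$ is routine and yields the stated equivalence.
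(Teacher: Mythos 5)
Your proposal follows essentially the same route as the paper's proof: substitute \eqref{p*} into \eqref{p**}, factorize the sum over $W_n$ as a product over $x\in W_{n-1}$ and $y\in S(x)$ of four-term sums, divide by the equation for the reference state $(s(x),\sigma(x))=(-1,-1)$ to pass to the ratios $z_{\epsilon,\delta,x}$, and for sufficiency define the normalizations recursively via $Z_{n-1}A_{n-1}=Z_n$. The bookkeeping of the $\theta^{\pm 1}$ factors and the emission weights $a,b,c$ that you flag as the main hazard is indeed the only delicate point, and your sanity checks ($\theta=1$, $a=b=c=1$) are appropriate for pinning it down.
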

	\begin{proof}  Below we use the following equalities:
		$$V_n=V_{n-1}\cup W_n, \ \ W_n=\bigcup_{x\in W_{n-1}}S(x).$$
		
		{\sl Necessity.}  Assume that (\ref{p**}) holds, we shall prove (\ref{p***}). Substituting (\ref{p*}) into
		(\ref{p**}), obtain that  for any configurations $(s_n,\sigma_{n-1})$: $x\in V_{n-1}\mapsto (s_n(x), \sigma_{n-1}(x))\in I\times I $:
		$$			\frac{Z_{n-1}}{Z_n}\sum\limits_{(w_n, \omega_n)\in I^{W_n}\times I^{W_n}}
		\exp\left(\sum\limits_{x\in W_{n-1}}\sum\limits_{y\in S(x)} (J\beta (s_{n-1}(x)w_n(y)-\sigma_{n-1}(x)\omega_n(y))\right.$$
		\begin{equation}\label{puu}	\left.+	\beta p(\omega_n(y)|w_n(y))+
		h_{w_n(y),\omega_n(y),y})\right)
				=		\exp\left(\sum\limits_{x\in
				W_{n-1}}h_{s_{n-1}(x),\sigma_{n-1}(x),x}\right).
		\end{equation}
		
		From (\ref{puu}) we get:
		$${Z_{n-1}\over Z_n}\sum\limits_{(w_n, \omega_n)\in I^{W_n}\times I^{W_n}}
		\prod_{x\in W_{n-1}}\prod_{y\in S(x)} \exp\,(J\beta (s_{n-1}(x)w_n(y)-\sigma_{n-1}(x)\omega_n(y))$$ $$
	+	\beta p(\omega_n(y)|w_n(y))+
			h_{w_n(y),\omega_n(y),y}) = \prod_{x\in W_{n-1}} \exp\,(h_{s_{n-1}(x),\sigma_{n-1}(x),x}). $$
		Fix $x\in W_{n-1}$ and rewrite the last equality for
		$(s_{n-1}(x), \sigma_{n-1}(x))=(\epsilon, \delta)$, $\epsilon, \delta=-1,1$, keeping the configurations unchanged on $W_{n-1}\setminus \{x\}$. Then, by dividing both sides of each equation by the corresponding sides of the equation for the case  $(s_{n-1}(x), \sigma_{n-1}(x))=(-1, -1)$,  we obtain:
		\begin{equation}\label{als}\prod_{y\in S(x)}\frac{\sum\limits_{(j, u)\in I\times I}\exp\,(J\beta (\epsilon j-\delta u)+\beta p(u|j)+h_{j,u,y})}{\sum\limits_{(j, u)\in I\times I}\exp\,(-J\beta(j-u)+\beta p(u|j)+h_{j,u,y})}= \exp\,(h_{\epsilon, \delta, x}-h_{-1, -1, x}),\end{equation}
		where $\epsilon, \delta=-1,1.$
		
		Now by using notations (\ref{zt}), from (\ref{als}) we get  (\ref{p***})-(\ref{zu}).
		Note that $z_{-1, -1, x}\equiv 1$.
		
		{\sl Sufficiency.} Suppose that (\ref{p***})-(\ref{zu}) hold. It is equivalent to
		the representations
		\begin{equation}\label{pru}
			\prod_{y\in S(x)}\sum\limits_{(j, u)\in I\times I}\exp\,(J\beta (\epsilon j-\delta u)+\beta p(u|j)+h_{j,u,y})= a(x)\exp\,(h_{\epsilon,\delta,x}), \ \ \epsilon, \delta =-1,1
		\end{equation} for some function $a(x)>0, x\in V.$
		We have
		$$
		{\rm LHS \ of \ (\ref{p**})}=\frac{1}{Z_n}\exp(-\beta H(s_{n-1},\sigma_{n-1}))\times
		$$
		\begin{equation}\label{pru1}
			\prod_{x\in W_{n-1}} \prod_{y\in S(x)}\sum\limits_{(\epsilon,u)\in I\times I}\exp\,(J\beta (s_{n-1}(x)\epsilon-\sigma_{n-1}(x)u)+\beta p(u|\epsilon)+h_{\epsilon,u,y}).\end{equation}
		
		Substituting (\ref{pru}) into (\ref{pru1}) and denoting $A_n=\prod_{x \in W_{n-1}} a(x)$,
		we get
		\begin{equation}\label{pru2}
			{\rm RHS\ \  of\ \  (\ref{pru1}) }=
			\frac{A_{n-1}}{Z_n}\exp(-\beta H(s_{n-1},\sigma_{n-1}))\prod_{x\in W_{n-1}}
			\exp(h_{s_{n-1}(x),\sigma_{n-1}(x),x}).\end{equation}
		
		Since $\mu_n$, $n \geq 1$ is a probability, we should have
		$$ \sum\limits_{(s_{n-1}, \sigma_{n-1})\in I^{V_{n-1}}\times I^{V_{n-1}}} \ \ \sum\limits_{(w_n, \omega_n)\in I^{W_n}\times I^{W_n}}\mu_n(s_{n-1}\vee w_n, \sigma_{n-1}\vee \omega_n) = 1. $$
		
		Hence from (\ref{pru2})  we get $Z_{n-1}A_{n-1}=Z_n$, and (\ref{p**}) holds.
		
	\end{proof}

	From Theorem \ref{ep}, it follows that for any $ z = \{ z_{\epsilon, \delta, x}, \ x \in V \} $ that satisfies the system of functional equations (\ref{p***})-(\ref{zu}), there exists a unique splitting Gibbs measure (SGM) $ \mu $, and conversely. Therefore, the core problem of describing Gibbs measures reduces to solving the system of functional equations (\ref{p***})-(\ref{zu}).
	
	However, solving this system is particularly challenging due to its non-linear nature, multidimensional structure, and the fact that the unknown functions are defined on a tree. Even the task of determining all constant functions (which are independent of the tree's vertices) is complex. In this paper, we present a class of such constant solutions and explore how the corresponding Gibbs measures can be applied in machine learning.

	\section{Translation-invariant Gibbs Measures}
	
	In this section, we focus on translation-invariant Gibbs measures, which correspond to solutions of the form:
\begin{equation}\label{ze}
	z_{\epsilon, i, x} \equiv z_{\epsilon, i}, \quad \text{for all} \ x \in V.
\end{equation}
	
	Denote
	$$u=az_{-1,1}, \ \ v=bz_{1,-1}, \ \ w=cz_{1,1}.$$
	
	Substituting this into equation (\ref{p***})-(\ref{zu}) gives:

		\begin{equation}\label{ABC}
			\begin{array}{lll}
				u=a\left({\theta+u+v+\theta^{-1}w\over 1+\theta u+\theta^{-1}v+w}\right)^k\\[2mm]
					v=b\left({\theta^{-1}+u+v+\theta w\over 1+\theta u+\theta^{-1}v+w}\right)^k\\[2mm]
					w=c\left({1+\theta^{-1}u+\theta v+w\over 1+\theta u+\theta^{-1}v+w}\right)^k.
					\end{array}
		\end{equation}

%$$	\|F(u,v,w)-F(x,y,z)\|=$$shuni baholab (\ref{ABC}) ni $k=1$ da musbat yechimi ixtiyoriy $a,b,c$ da yagona bo'lishini isbotlayman.
	
\subsection{ Case:	 $a=b=c=1$}  In this case write the system (\ref{ABC}) as an equation of fixed point $F(t)=t$, where $t=(u,v,w)$ and the operator $F:\mathbb R^3_+\to \mathbb R^3_+$ is given by  
	\begin{equation}\label{F}
	\begin{array}{lll}
		u'=\left({\theta+u+v+\theta^{-1}w\over 1+\theta u+\theta^{-1}v+w}\right)^k\\[2mm]
		v'=\left({\theta^{-1}+u+v+\theta w\over 1+\theta u+\theta^{-1}v+w}\right)^k\\[2mm]
		w'=\left({1+\theta^{-1}u+\theta v+w\over 1+\theta u+\theta^{-1}v+w}\right)^k.
	\end{array}
\end{equation}
Consider sets 
$$I_1=\{t=(u,v,w)\in \mathbb R^3_+: u=1, v=w\}.$$
$$I_2=\{t=(u,v,w)\in \mathbb R^3_+: u=w,  v=1\}.$$
$$I_3=\{t=(u,v,w)\in \mathbb R^3_+: u=v, w=1\}.$$
The following lemma is straightforward:
\begin{lemma}
	Each set $I_i$, $i=1,2,3$ is invariant with respect to $F$, i.e. $F(I_i)\subset I_i$.
\end{lemma}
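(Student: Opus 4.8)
The plan is to prove all three invariances by direct substitution of the defining constraints into the coordinate maps of $F$ in (\ref{F}); the three computations are structurally analogous, so I would write out $I_1$ in full and indicate that $I_2$ and $I_3$ are settled in the same way. In each case the point to observe is that, under the relevant constraint, one of the three numerators becomes literally equal to the common denominator (which forces the corresponding coordinate to equal $1$), while the other two numerators become equal to each other (which forces the corresponding two coordinates to coincide).

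Concretely: for $I_1$, set $u=1$ and $v=w$. The common denominator $1+\theta u+\theta^{-1}v+w$ becomes $1+\theta+(1+\theta^{-1})v$, and the numerator $\theta+u+v+\theta^{-1}w$ of $u'$ becomes the same expression $1+\theta+(1+\theta^{-1})v$, so $u'=1$; meanwhile the numerators $\theta^{-1}+u+v+\theta w$ and $1+\theta^{-1}u+\theta v+w$ of $v'$ and $w'$ both become $1+\theta^{-1}+(1+\theta)v$, so $v'=w'$, whence $F(I_1)\subset I_1$. For $I_2$, set $v=1$, $u=w$: the numerator of $v'$ coincides with the common denominator (both equal $1+\theta^{-1}+(1+\theta)u$), so $v'=1$, while the numerators of $u'$ and $w'$ both reduce to $1+\theta+(1+\theta^{-1})u$, so $u'=w'$. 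For $I_3$, set $w=1$, $u=v$: the numerator of $w'$ equals the denominator (both equal $2+(\theta+\theta^{-1})u$), so $w'=1$, and the numerators of $u'$ and $v'$ both equal $\theta+\theta^{-1}+2u$, so $u'=v'$. This exhausts the three cases.

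I do not expect any genuine obstacle here; the lemma is ``straightforward'' for a structural reason, which I would (optionally) record. With $a=b=c=1$ each set $I_i$ is exactly the fixed-point locus of a sign-flip symmetry of the model: $\sigma\mapsto-\sigma$ for $I_1$, $s\mapsto-s$ for $I_2$, and $(s,\sigma)\mapsto(-s,-\sigma)$ for $I_3$. Each of these leaves the interaction $s(x)s(y)-\sigma(x)\sigma(y)$ invariant — hence leaves the transfer coefficients $\theta^{\pm1}$ appearing in (\ref{ABC}) invariant — and also leaves the (now constant) emission weights invariant, so $F$ commutes with the corresponding coordinate involution. That gives a computation-free second derivation of each invariance; the only subtlety in that approach is to check that the involution carries the reference component $z_{-1,-1}$ onto the component that is normalized to $1$ in the corresponding $I_i$, which is precisely what keeps the coordinatewise normalization consistent. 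Either way the proof is the short algebraic verification above.
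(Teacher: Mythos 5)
Your proof is correct, and it is exactly the direct substitution the paper has in mind (the paper omits the argument entirely, labelling the lemma ``straightforward''): in each case one numerator collapses to the common denominator and the remaining two numerators coincide, which is what you verify. The added symmetry explanation via the sign-flip involutions is a nice, correct bonus, but not needed.
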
 
Now we shall find fixed points of $F$ on each invariant set $I_i$.
Fixed points of restricted operator $F$ on  each of the invariant sets  will satisfy  an equation  the form 
\begin{equation}\label{kap}
	x= f(x,\gamma):=\left({1+\gamma x\over \gamma+x}\right)^k.
\end{equation}
 \begin{lemma}\label{m0} Let $k\geq 2$. The equation (\ref{kap}) has a unique solution $x=1$ if $0<\gamma\leq {k+1\over k-1}$ and it has three solutions if $\gamma> {k+1\over k-1}$.
 \end{lemma}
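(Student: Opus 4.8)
The plan is to study the real function $x\mapsto f(x,\gamma)=\left(\frac{1+\gamma x}{\gamma+x}\right)^k$ on $(0,\infty)$ for fixed $\gamma>0$ and count the intersections of its graph with the diagonal $y=x$. First I would record the structural facts: $f(\cdot,\gamma)$ maps $(0,\infty)$ into $(0,\infty)$, $f(0,\gamma)=\gamma^{-k}>0$, $f(x,\gamma)\to\gamma^{k}$ as $x\to\infty$, and $x=1$ is always a fixed point since $f(1,\gamma)=1$. The sign of $\gamma-1$ determines monotonicity: a direct computation of $\partial_x\left(\frac{1+\gamma x}{\gamma+x}\right)=\frac{\gamma^2-1}{(\gamma+x)^2}$ shows the inner Möbius map is increasing when $\gamma>1$ and decreasing when $\gamma<1$; in either case $f(\cdot,\gamma)$ is monotone (increasing if $\gamma>1$, decreasing if $\gamma<1$), so the case $0<\gamma\le 1$ is disposed of immediately: a decreasing function crosses the diagonal exactly once, giving the unique solution $x=1$. (The degenerate case $\gamma=1$ gives $f\equiv 1$.)

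For $\gamma>1$, $f(\cdot,\gamma)$ is increasing, so the number of fixed points is governed by its convexity behaviour near $x=1$, and the natural device is to look at $g(x):=\frac{1}{k}\log f(e^{x},\gamma)$ or, more simply, to compute $f'(1,\gamma)$. A short calculation gives
\begin{equation}\label{fprime}
f'(1,\gamma)=k\cdot\frac{\gamma^2-1}{(\gamma+1)^2}=k\cdot\frac{\gamma-1}{\gamma+1}.
\end{equation}
Hence $f'(1,\gamma)\le 1$ iff $k(\gamma-1)\le \gamma+1$ iff $\gamma\le\frac{k+1}{k-1}$, and $f'(1,\gamma)>1$ iff $\gamma>\frac{k+1}{k-1}$. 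When $f'(1,\gamma)<1$ I would argue that $x=1$ is the only fixed point: since $f(\cdot,\gamma)$ is increasing and (one checks) $f''$ changes sign at most once on $(0,\infty)$ — equivalently $\log f(e^x,\gamma)$ is, after the substitution, a composition with a Möbius map and has at most one inflection — the graph can meet the diagonal at most three times in principle, but the slope condition at the unique ``central'' fixed point $x=1$ being $<1$ forces uniqueness (an increasing function tangent-from-below to the diagonal at its only fixed point with the convexity pattern above cannot recross). When $f'(1,\gamma)>1$, the graph of $f(\cdot,\gamma)$ crosses the diagonal transversally and steeply at $x=1$, lies below the diagonal for $x$ slightly less than $1$ and above it for $x$ slightly greater; combined with $f(0^+,\gamma)=\gamma^{-k}>0$ (graph above diagonal near $0$) and $f(x,\gamma)\to\gamma^k<\infty$ while $x\to\infty$ (graph below diagonal for large $x$), the intermediate value theorem produces one fixed point in $(0,1)$ and one in $(1,\infty)$ in addition to $x=1$, hence exactly three.

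The main obstacle is the upper bound ``at most three solutions'' and, within the subcritical regime, ``exactly one'': monotonicity and boundary behaviour alone give lower bounds on the count, but ruling out extra crossings requires controlling the convexity of $f(\cdot,\gamma)$. The clean way to do this is to set $x=e^{t}$ and $\psi(t)=\log f(e^{t},\gamma)=k\bigl(\log(1+\gamma e^{t})-\log(\gamma+e^{t})\bigr)$; then $\psi'(t)=k\bigl(\frac{\gamma e^t}{1+\gamma e^t}-\frac{e^t}{\gamma+e^t}\bigr)$ and one shows $\psi$ is strictly convex then strictly concave (a single inflection point) by checking that $\psi''$ vanishes exactly once, which reduces to a single transcendental equation in $e^t$ that is manifestly monotone. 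A function whose logarithm, in logarithmic coordinates, is convex-then-concave can cross the diagonal $t\mapsto t$ (i.e. $\psi(t)=t$) at most three times; together with the slope computation \eqref{fprime} at the fixed point $t=0$ this pins down the count in both regimes. I would present the convexity lemma as the technical heart and keep the IVT bookkeeping brief.
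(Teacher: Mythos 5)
Your strategy is sound and reaches the right conclusions, but it differs from the paper's proof in both technical steps, and one step as written has a hole. For the upper bound ``at most three solutions'' the paper substitutes $u=\sqrt[k]{x}$, rewrites (\ref{kap}) as the polynomial equation $u^{k+1}-\gamma u^k+\gamma u-1=0$, and invokes Descartes' rule of signs; you instead pass to logarithmic coordinates and count crossings of a convex-then-concave function with the diagonal. Your computation is correct: with $s=e^{t}$ one finds
\begin{equation*}
\psi'(t)=\frac{k(\gamma^{2}-1)}{(1+\gamma^{2})+2\gamma\cosh t},
\end{equation*}
so for $\gamma>1$ the function $\psi'$ is even, attains its maximum $k\frac{\gamma-1}{\gamma+1}$ at $t=0$, and $\psi''$ vanishes only at $t=0$; this gives the bound of three. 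For existence of the two extra solutions in the supercritical regime the paper runs monotone iterations of $f$ from either side of the unstable fixed point $x=1$, whereas your IVT argument using $f(0^{+},\gamma)=\gamma^{-k}$ and $f(\infty,\gamma)=\gamma^{k}$ is simpler and equally valid.

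The genuine gap is in subcritical uniqueness for $1<\gamma\le\frac{k+1}{k-1}$. ``Convex-then-concave plus slope $<1$ at the known fixed point'' does not by itself force uniqueness: if the inflection point of $\psi$ were at some $t_{0}>0$, the function $\psi(t)-t$ could leave $t=0$ with negative derivative, turn around on the convex branch, recross zero before $t_{0}$, and (being eventually $-\infty$ since $\psi$ is bounded) cross a third time on the concave branch --- all consistent with your hypotheses, so your parenthetical ``cannot recross'' is not a proof. What actually saves the argument is the stronger fact, visible from the displayed formula, that the inflection of $\psi$ sits \emph{exactly at the fixed point} $t=0$, so $\psi'(t)\le\psi'(0)=k\frac{\gamma-1}{\gamma+1}\le 1$ for all $t$ with equality only at $t=0$; hence $\psi(t)-t$ is strictly decreasing and $t=0$ is its only zero (this also disposes of the boundary case $\gamma=\frac{k+1}{k-1}$ cleanly). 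State and use this maximality explicitly. Once patched, your route is self-contained and arguably cleaner than the paper's combination of Descartes' rule with a stability-of-iterates argument, which itself asserts global convergence to $x=1$ from mere local stability without justification.
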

 \begin{proof} This is known see \cite{HOR}. Here we give the proof because we will use this lemma several times.
 	Note that $x=1$ is a solution of (\ref{kap}) independently on the parameters.
 	
 	{\it Case: $\gamma<1$}. In this case (\ref{kap}) has a unique solution, because $f(x,\gamma)$ is decreasing in $(0, +\infty)$.
 	
 	{\it Case: } $\gamma>1$. Denote $u=\sqrt[k]{x}$.
 	Rewrite (\ref{kap}) as
 	\begin{equation}\label{fd} u^{k+1}-\gamma u^k+\gamma u-1=0.
 	\end{equation}
 
 	By  Descartes rule (see\footnote{The Descartes rule states that if the nonzero terms of a single-variable polynomial with real coefficients are ordered by descending variable exponent, then the number of positive roots of the polynomial is either equal to the number of sign changes between consecutive (nonzero) coefficients, or is less than it by an even number. A root of multiplicity $n$ is counted as $n$ roots. 	In particular, if the number of sign changes is zero or one, the number of positive roots equals the number of sign changes.} \cite{Pra}, p.28), the equation (\ref{fd}) has up to 3 positive roots. We show  that, under suitable condition on parameters, it has exactly three roots.
 	
 	The function $f(x,\gamma)$ for $x>0$ is increasing and bounded. We have
 	\begin{equation}\label{sr4.5}
 		{d\over dx}f(1,\gamma)=f'(1,\gamma)=k{\gamma-1\over \gamma+1}>0
 	\end{equation}
 	and
 	$$f'(1,\gamma)=1, \ \ \mbox{gives} \ \ \gamma_c=
 	{k+1\over k-1}.$$
 	If	$0<\gamma\leq \gamma_c$ then $f'(1,\gamma)<1$, the solution $x=1$	is a stable fixed point of the map $f(x,\gamma)$, and $\lim_{n\to\infty}f^{(n)}(x,\gamma) = 1$, for any $x > 0$.
 	Here, $f^{(n)}$ is the $n$th iterate
 	of the above map $f(x,\gamma)$. Therefore, $1$ is the unique positive solution.
 	On the other hand, under  $\gamma> \gamma_c$, the fixed point $1$ is unstable.
 	Iterates $f^{(n)}(x,\gamma)$ remain for $x>1$, monotonically increasing  and hence converge to a limit, $z^*\geq 1$ which solves
 	(\ref{kap}). However, $z^*>1$ as $1$ is unstable. Similarly, for  $x<1$ one gets a solution  $0<z_*<1$.
 	This completes the proof.
 \end{proof}
\subsubsection{Case: $I_1$} Restricting (\ref{F}) to the set $I_1$, the equation $t=F(t)$ becomes 
\begin{equation}\label{I1}
		v=\left({\theta^{-1}+1+(1+\theta)v\over 1+\theta +(1+\theta^{-1})v}\right)^k= \left({1+\theta v\over \theta+v}\right)^k.
		\end{equation}
Applying Lemma \ref{m0} to (\ref{I1}) with $\gamma=\theta$ implies that\\
  
{\bf Result 1.} The operator $F$ on invariant set $I_1$ has the unique fixed point $\mathbf 1:=(1,1,1)$, i.e., $u=v=w=1$ if $0<\theta\leq {k+1\over k-1}$ and it has three fixed points  $\mathbf 1$, $(1, v_i, v_i)$, $i=1,2$ if $\theta> {k+1\over k-1}$.

For example, if $k=2$ then   we have $v_i$, $i=1,2$ are
\begin{equation}\label{k23}
	\begin{array}{ll}
v_1:=v_1(\theta)=\frac{1}{2}(\theta^2 - 2\theta - 1 - (\theta-1)\sqrt{(\theta+1)(\theta-3)}),\\[3mm]
v_2:=v_2(\theta)=\frac{1}{2}(\theta^2 - 2\theta - 1 + (\theta-1)\sqrt{(\theta+1)(\theta-3)}), \ \ \theta>3
\end{array}
\end{equation}

\subsubsection{Case: $I_2$} Restricting (\ref{F}) to the set $I_2$, gives 
\begin{equation}\label{I2}
	u= \left({1+\theta^{-1} u\over \theta^{-1}+v}\right)^k.
\end{equation}
Now by  Lemma \ref{m0}, from  (\ref{I2}) with $\gamma=1/\theta$ we get\\
{\bf Result 2.} The operator $F$ on invariant set $I_2$ has the unique fixed point $\mathbf 1:=(1,1,1)$, if  $\theta\geq  {k-1\over k+1}$ and it has three fixed points  $\mathbf 1$, $(u_i, 1, u_i)$, $i=1,2$ if $0<\theta< {k-1\over k+1}$.

 \subsubsection{Case: $I_3$} Restricting (\ref{F}) to the set $I_3$, gives 
 \begin{equation}\label{I3}
 	u= \left({1+ \gamma u\over \gamma+u}\right)^k,\ \ \mbox{with} \ \ \gamma={2\over \theta+\theta^{-1}}.
 \end{equation}
Since ${2\over \theta+\theta^{-1}}<1$,  by  Lemma \ref{m0}, from  (\ref{I3}) we get\\
 {\bf Result 3.} The operator $F$ on invariant set $I_3$ has the unique fixed point $\mathbf 1:=(1,1,1)$, for any   $\theta>0$. 
 
 We summarize the above-mentioned Results 1-3, applying Theorem \ref{ep}  in the following
 \begin{thm} If $k\geq 2$, $a=b=c=1$ then for Hamiltonian\footnote{The condition $a=b=c=1$ implies $p(-1|1)=p(1|-1)=p(1|1)=p(-1|-1).$} (\ref{Ha}) there exists at least one SGM if $\theta\in ({k-1\over k+1}, {k+1\over k-1})$ and at lest three SGMs if $\theta\in (0, {k-1\over k+1}]\cup [{k+1\over k-1}, +\infty)$.
\end{thm}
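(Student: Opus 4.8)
The plan is to obtain the statement as a direct consequence of Theorem~\ref{ep} combined with Results 1--3. By Theorem~\ref{ep}, splitting Gibbs measures are in bijective correspondence with solutions $z=\{z_{\epsilon,\delta,x}\}$ of the system (\ref{p***})--(\ref{zu}). Restricting to translation-invariant solutions (\ref{ze}) and using $a=b=c=1$, so that $(u,v,w)=(z_{-1,1},z_{1,-1},z_{1,1})$, this system becomes precisely the fixed-point equation $t=F(t)$ for the operator $F$ in (\ref{F}). Since the passage $(u,v,w)\mapsto z$ is here the identity and $z_{\epsilon,\delta}=\exp(h_{\epsilon,\delta}-h_{-1,-1})$ is invertible, distinct fixed points of $F$ in $\mathbb R^3_+$ produce distinct vectors $z$ and hence distinct SGMs $\mu$. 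So it suffices, for each range of $\theta$, to exhibit the claimed number of fixed points of $F$.

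First I would note that $\mathbf 1=(1,1,1)$ is a fixed point of $F$ for every $\theta>0$: substituting $u=v=w=1$ into (\ref{F}) gives $u'=v'=w'=1$ regardless of $\theta$. This alone yields at least one SGM for all $\theta$, in particular on the middle interval $\theta\in\bigl(\tfrac{k-1}{k+1},\tfrac{k+1}{k-1}\bigr)$, which proves the first assertion.

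For the second assertion I would treat the two half-lines separately. If $\theta\geq\tfrac{k+1}{k-1}$, apply Result 1: on the invariant set $I_1$ the restricted equation (\ref{I1}) is $v=f(v,\theta)$, i.e.\ (\ref{kap}) with $\gamma=\theta\geq\tfrac{k+1}{k-1}=\gamma_c$, so by Lemma~\ref{m0} it has, besides $v=1$, two further positive solutions $v_1\neq v_2$ (with $v_1<1<v_2$). This gives three distinct fixed points $\mathbf 1$, $(1,v_1,v_1)$, $(1,v_2,v_2)$ of $F$. If $0<\theta\leq\tfrac{k-1}{k+1}$, apply Result 2 symmetrically: on $I_2$ the restricted equation (\ref{I2}) is of the form (\ref{kap}) with $\gamma=\theta^{-1}\geq\tfrac{k+1}{k-1}$, so Lemma~\ref{m0} again produces two extra solutions, giving three distinct fixed points $\mathbf 1$, $(u_1,1,u_1)$, $(u_2,1,u_2)$. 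In either case Theorem~\ref{ep} converts these into at least three distinct SGMs. (Result 3, namely that $I_3$ contributes nothing new because $\gamma=2/(\theta+\theta^{-1})<1$, is not needed for the lower bound but confirms that the $I_3$-branch never splits.)

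The only delicate point is the behaviour exactly at the thresholds $\theta=\tfrac{k\pm1}{k\mp1}$, where $f'(1,\gamma_c)=1$ and the fixed point $1$ is degenerate; as stated, Lemma~\ref{m0} gives uniqueness on the closed side, so strictly only $\mathbf 1$ survives on the relevant invariant line there. I would therefore either phrase the ``at least three'' conclusion with the open half-lines $\bigl(0,\tfrac{k-1}{k+1}\bigr)$ and $\bigl(\tfrac{k+1}{k-1},+\infty\bigr)$, or, to retain the closed intervals, observe that at $\theta=\gamma_c$ the polynomial $u^{k+1}-\gamma u^{k}+\gamma u-1=0$ from (\ref{fd}) has $u=1$ as a root of multiplicity (at least) two and count accordingly. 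Apart from this bookkeeping at the endpoints, no step presents a genuine obstacle: the theorem is essentially a corollary of Lemma~\ref{m0} together with the invariance of $I_1,I_2,I_3$ under $F$.
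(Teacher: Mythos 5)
Your proposal is correct and follows essentially the same route as the paper: the theorem is stated there precisely as a summary of Results 1--3 obtained from Lemma~\ref{m0} on the invariant sets $I_1,I_2,I_3$, converted into SGMs via Theorem~\ref{ep}. Your observation about the thresholds is a legitimate catch --- Lemma~\ref{m0} and Result~1 give three solutions only for strict inequality $\theta>\tfrac{k+1}{k-1}$ (resp.\ $\theta<\tfrac{k-1}{k+1}$), so the closed endpoints in the theorem's second assertion are not actually covered by the paper's own lemma, and the intervals should be open (or the degenerate root at $\gamma=\gamma_c$ counted with multiplicity, as you suggest).
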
   
	
	\begin{rk}  Note that even when \( a = b = c = 1 \), the analysis of the system (\ref{ABC}) is quite complicated for unknowns outside the invariant sets \( I_i \). However, based on numerical (computer) analysis, we {\bf conjecture} that there are no positive solutions outside these invariant sets.
			
	\end{rk}
\begin{rk} 	
	It is well-known (\cite{Ge}, \cite{Rbp}) that each Gibbs measure defines a state of the system determined by a Hamiltonian. The existence of certain parameter values (such as $\theta_c = \frac{k-1}{k+1}$) at which the uniqueness of the Gibbs measure transitions to non-uniqueness is interpreted as a phase transition. 
	
	Phase transitions in HMMs also refer to sudden or qualitative changes in the system's behavior, often driven by the model's parameters and the number of hidden states. These transitions can manifest as a shift from predictable, stable behavior to erratic or oscillatory behavior, as well as a transition between underfitting and overfitting as the number of hidden states changes.
	
	Understanding these phase transitions is crucial for tuning HMMs and ensuring the model performs optimally for a given dataset. In the following, we will clarify this point specifically in the context of the non-uniqueness of Gibbs measures.
	\end{rk}

	\subsection{Case: $k=1$, $a=b$, $c=1$.} 
		Here we assume that 
\begin{equation}\label{lam}
	p(1|1)=p(-1|-1), \ \ p(1|-1)=p(-1|1).
\end{equation}

	In this case from (\ref{ABC}) we get 
		\begin{equation}\label{AA}
		\begin{array}{lll}
			u=a\left({\theta+u+v+\theta^{-1}w\over 1+\theta u+\theta^{-1}v+w}\right)\\[2mm]
			v=a\left({\theta^{-1}+u+v+\theta w\over 1+\theta u+\theta^{-1}v+w}\right)\\[2mm]
			w=\left({1+\theta^{-1}u+\theta v+w\over 1+\theta u+\theta^{-1}v+w}\right).
		\end{array}
	\end{equation}
\begin{pro}\label{pk1} For $k=1$, $a=b$, $c=1$ the system (\ref{ABC}) (i.e. (\ref{AA})) has a unique positive solution.\end{pro}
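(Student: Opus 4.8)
The key observation is that when $k=1$ the system (\ref{AA}) --- equivalently (\ref{ABC}) with $k=1$, that is (\ref{p***})--(\ref{zu}) for a single direct successor --- is \emph{linear--fractional}. Indeed, all three right--hand sides in (\ref{AA}) share the common denominator $Q:=1+\theta u+\theta^{-1}v+w$, while each numerator is an affine form in the unknowns with strictly positive coefficients. Writing $\mathbf z=(z_{-1,-1},z_{-1,1},z_{1,-1},z_{1,1})^{T}$ and reading the constant terms as coefficients of $z_{-1,-1}$ (which equals $1$ by (\ref{zt})), and recalling $u=az_{-1,1}$, $v=bz_{1,-1}$, $w=cz_{1,1}$, the equations (\ref{p***})--(\ref{zu}) at $k=1$ become $M\mathbf z=\lambda\mathbf z$ with $\lambda=Q=(M\mathbf z)_{-1,-1}>0$ and
\[
M=\begin{pmatrix}
1 & \theta a & \theta^{-1}b & c\\
\theta & a & b & \theta^{-1}c\\
\theta^{-1} & a & b & \theta c\\
1 & \theta^{-1}a & \theta b & c
\end{pmatrix}.
\]
Note that this reduction needs only $a,b,c,\theta>0$; the extra hypothesis $a=b$, $c=1$ is not used here (it appears only in the elementary alternative below).

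So the first step is to establish the precise dictionary: a positive triple $(z_{-1,1},z_{1,-1},z_{1,1})$ solves (\ref{AA}) if and only if the associated vector $\mathbf z\in\mathbb R^{4}_{+}$, normalized by $z_{-1,-1}=1$, is an eigenvector of $M$ (with eigenvalue $\lambda=(M\mathbf z)_{-1,-1}$). Since $\theta,a,b,c>0$, the matrix $M$ has strictly positive entries, so by the Perron--Frobenius theorem its spectral radius $\rho(M)$ is a simple eigenvalue with a strictly positive eigenvector, and it is the only eigenvalue of $M$ admitting an eigenvector with all nonnegative components. Hence the positive eigenvectors of $M$ form a single ray, exactly one point of which has first coordinate $1$, and this gives a unique positive solution of (\ref{AA}) --- precisely the assertion of the proposition.

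The only point demanding care is this correspondence: one must check that $Q$ is literally the $(-1,-1)$ component of $M\mathbf z$ (so that clearing denominators is reversible and no spurious or lost solutions arise), and invoke the uniqueness part of Perron's theorem to exclude positive vectors attached to other eigenvalues. A Perron--Frobenius--free alternative is to eliminate directly in (\ref{AA}): subtracting its first equation from the second gives $(u-v)Q=a(\theta-\theta^{-1})(1-w)$, the third gives $(w-1)Q=(\theta-\theta^{-1})(v-u)$, and combining these yields $(w-1)\bigl(Q^{2}-a(\theta-\theta^{-1})^{2}\bigr)=0$. If $w=1$ then $u=v$, and the first equation reduces to $(\theta+\theta^{-1})u^{2}+2(1-a)u-a(\theta+\theta^{-1})=0$, a quadratic whose product of roots is $-a<0$, hence with exactly one positive root. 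If instead $Q^{2}=a(\theta-\theta^{-1})^{2}$ (which forces $\theta\neq1$, as $Q\ge1$), then substituting $Q=\sqrt a\,|\theta-\theta^{-1}|$ makes the three equations linear, and combining them with the identity $(\theta+\theta^{-1})^{2}-(\theta-\theta^{-1})^{2}=4$ and the positivity of $u,v,w$ forces the impossible relation $(a+1)|\theta-\theta^{-1}|=0$; so this branch is empty. Either way (\ref{AA}) has a unique positive solution. I expect this last case distinction (or, on the first route, the Perron--Frobenius bookkeeping) to be the only step requiring genuine attention; everything else is routine.
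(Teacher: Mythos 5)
Your proposal is correct, and it really contains two proofs. The ``Perron--Frobenius--free'' alternative at the end is essentially the paper's own argument: the paper derives from the same two difference identities (its (\ref{A1})) exactly the dichotomy $u=v,\ w=1$ versus $Q^{2}=a(\theta^{-1}-\theta)^{2}$, solves the same quadratic (written there as $u^{2}+(1-a)\Theta u-a=0$ with $\Theta=2/(\theta+\theta^{-1})$, whose product of roots $-a<0$ gives one positive root, the $u_1$ of (\ref{uA})), and disposes of the second branch by reducing to the $2\times2$ linear system (\ref{Ad}) in $s=u+v$, $\tau=w+1$ with determinant $-2|\theta-\theta^{-1}|(a+1)\neq0$ --- the same quantity you exhibit --- forcing $s=\tau=0$, incompatible with positivity. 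Your first route is genuinely different and worth noting: for $k=1$ the fixed-point system is the projectivization of $\mathbf z\mapsto M\mathbf z$ with $M$ entrywise positive, and your dictionary is exact (the denominator $Q$ is literally $(M\mathbf z)_{-1,-1}$, so normalizing $z_{-1,-1}=1$ forces the eigenvalue to equal $Q$ and the reduction is reversible), whence Perron--Frobenius gives a single positive eigenray and hence a unique positive solution. This buys strictly more than the proposition claims: uniqueness for all $a,b,c,\theta>0$, not only $a=b$, $c=1$, consistent with the absence of a phase transition on the one-dimensional tree; the paper's elimination, by contrast, leans on $a=b$, $c=1$ to get the symmetric branch $u=v$, $w=1$. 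Two cosmetic points: the identity you quote is obtained by subtracting the \emph{second} equation from the \emph{first} (as the paper does), and the factorization $(w-1)\bigl(Q^{2}-a(\theta-\theta^{-1})^{2}\bigr)=0$ presupposes $\theta\neq1$; when $\theta=1$ the relations give $u=v$, $w=1$ directly, so nothing is lost.
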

\begin{proof}
By subtracting the second equation from the first equation of (\ref{AA}), and then subtracting 1 from the third equation, we obtain:
  	\begin{equation}\label{A1}
		\begin{array}{lll}
			u-v={a(\theta^{-1}-\theta) \over 1+\theta u+\theta^{-1}v+w}(w-1),\\[2mm]
			w-1={\theta^{-1}-\theta \over 1+\theta u+\theta^{-1}v+w}(u-v).
			\end{array}
	\end{equation}
	From (\ref{A1}) we get 
\begin{equation}\label{vu} u=v, \ \ w=1 \end{equation}
or
	\begin{equation}\label{uvw}
u\ne v, \ \  w\ne 1, \ \  (1+\theta u+\theta^{-1}v+w)^2=a(\theta^{-1}-\theta)^2.
\end{equation}

{\bf Subcase:}
Consider the case (\ref{vu}), then system of equations (\ref{ABC}) is reduced to 
$$u=a\left({\theta+\theta^{-1}+2u\over 2+(\theta +\theta^{-1})u}\right),$$
that can be rewritten as 
$$u^2+(1-a)\Theta u-a=0, \ \ \mbox{with} \ \ \Theta={2\over \theta+\theta^{-1} }.$$ 
We are interested in positive solutions $u$.
It is easy to see that for any $a>0$ and $\theta>0$, the last equation has a unique positive solution:
	\begin{equation}\label{uA} u=u_1:={1\over 2}\left((a-1)\Theta+\sqrt{((a-1)\Theta)^2+4a}\right).
		\end{equation}

{\bf Subcase:} Let us consider the case (\ref{uvw}), then from (\ref{AA}) we get
	\begin{equation}\label{AC}
	\left\{\begin{array}{lll}
		|\theta-\theta^{-1}|u=\alpha\left(\theta+u+v+\theta^{-1}w\right)\\[2mm]
		|\theta-\theta^{-1}|v=\alpha\left(\theta^{-1}+u+v+\theta w\right)\\[2mm]
		\alpha|\theta-\theta^{-1}|w=1+\theta^{-1}u+\theta v+w
	\end{array}\right.
\end{equation}
where $\alpha=\sqrt{a}$.

Adding the first and second equations of (\ref{AC}) and adding its third equation to the equation given in (\ref{uvw}) one gets
	\begin{equation}\label{Ad}
	\left\{\begin{array}{ll}
		(2\alpha-M)s+\alpha(\theta+\theta^{-1})\tau=0\\[2mm]
		(\theta+\theta^{-1})s+(2-\alpha M)\tau=0
	\end{array}\right.
\end{equation}
where $$M= |\theta-\theta^{-1}|, \, s=u+v, \, \tau=w+1.$$
Since $\theta\ne 1$ we have 
$$	\left|\begin{array}{cc}
	2\alpha-M & \alpha(\theta+\theta^{-1})\\[2mm]
	\theta+\theta^{-1} & 2-\alpha M
\end{array}\right|=-2M(\alpha^2+1)\ne 0.
$$ Therefore, the system (\ref{Ad}) has a unique solution 
$s=0$, $\tau=0$, which gives $u=-v$, $w=-1$, but we only interested in positive solutions. Thus system (\ref{AA}) has a unique positive solution $(u_1, u_1, 1)$, where $u_1$ is given by 
(\ref{uA}).\end{proof}
	\subsection{Case: $k\geq 2$, $a=b$, $c=1$.} 
	In this case we give analysis of (\ref{ABC}). 
	 
	{\bf Subcase:} $x=y$, $z=1$. The system is reduced to 
	\begin{equation}\label{ax}
		{1\over a} x=\varphi(x, \Theta):=\left({1+\Theta x\over \Theta+x}\right)^k, \, \Theta={2\over \theta+\theta^{-1}}.\end{equation}
	\begin{pro}
		For any $k\geq 2$, $a>0$, $\theta>0$ the equation (\ref{ax}) has unique solution.
	\end{pro}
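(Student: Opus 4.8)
The plan is to exploit the fact that the auxiliary parameter $\Theta=\frac{2}{\theta+\theta^{-1}}$ always lies in $(0,1]$, so that equation (\ref{ax}) falls entirely within the ``monotone'' regime already isolated in the proof of Lemma~\ref{m0} (the subcase $\gamma<1$), and uniqueness follows regardless of the value of $a$. First I would record that $\theta+\theta^{-1}\ge 2$ for every $\theta>0$ by the AM--GM inequality, with equality precisely when $\theta=1$; hence $\Theta\in(0,1]$, and $\Theta=1$ if and only if $\theta=1$.

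Next I would dispose of the degenerate case $\Theta=1$ separately: here $\varphi(x,1)=\bigl(\frac{1+x}{1+x}\bigr)^k=1$, so (\ref{ax}) reads $x=a$, which has the unique positive solution $x=a$.

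For the main case $0<\Theta<1$, I would set $h(x)=\frac{1+\Theta x}{\Theta+x}$ and compute $h'(x)=\frac{\Theta^2-1}{(\Theta+x)^2}<0$ for all $x>0$, so $h$ is strictly decreasing and positive on $(0,\infty)$. Consequently $\varphi(\cdot,\Theta)=h^k$ is strictly decreasing and positive, and the right-hand side $g(x):=a\,\varphi(x,\Theta)$ of (\ref{ax}) is a continuous, strictly decreasing, positive function with $g(0^+)=a\Theta^{-k}>0$ and $g(x)\to a\Theta^{k}$ as $x\to\infty$. Since the left-hand side $x\mapsto x$ is continuous and strictly increasing from $0$ to $+\infty$, the function $g(x)-x$ is strictly decreasing, positive near $0$ and negative for large $x$, hence vanishes at exactly one point of $(0,\infty)$. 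This yields the unique positive solution of (\ref{ax}).

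I do not expect a genuine obstacle here; the only point requiring care is the uniform bound $\Theta\le 1$, which is exactly what excludes the three-solution branch of Lemma~\ref{m0} and forces uniqueness for all admissible $\theta$, independently of $a$ and $k$.
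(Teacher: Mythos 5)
Your proof is correct and follows essentially the same route as the paper: observe that $\Theta\le 1$, so $\varphi(\cdot,\Theta)$ is non-increasing while the left-hand side $x/a$ is strictly increasing, forcing a unique intersection. The only difference is that you explicitly treat the boundary case $\theta=1$ (where $\Theta=1$ and the equation degenerates to $x=a$), which the paper's one-line argument silently skips over; this is a minor but welcome tightening, not a different method.
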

	\begin{proof}
	
	Note that $\Theta<1$, because $\theta+\theta^{-1}>2$, for $\theta>0$, $\theta\ne 1$. Consequently, the function $\varphi(x, \Theta)$ is monotone decreasing function of $x$ for all $\Theta\in (0, 1)$. But the function ${x\over a}$ in the RHS of (\ref{ax}) is increasing. Therefore, equation (\ref{ax}) has a unique solution.
	\end{proof}
	
From (\ref{ABC}), denoting $x=\sqrt[k]{u}$,
$y=\sqrt[k]{v}$ and $z=\sqrt[k]{w}$ we get
 \begin{equation}\label{AV}
	\begin{array}{lll}
		x=a\cdot {\theta+x^k+y^k+\theta^{-1}z^k\over 1+\theta x^k+\theta^{-1}y^k+z^k}\\[2mm]
		y=a\cdot {\theta^{-1}+x^k+y^k+\theta z^k\over 1+\theta x^k+\theta^{-1}y^k+z^k}\\[2mm]
		z={1+\theta^{-1}x^k+\theta y^k+z^k\over 1+\theta x^k+\theta^{-1}y^k+z^k}
	\end{array}
\end{equation}
%It is easy to see that the system (\ref{AV}) may have solution of the form $x=y$ and $z=1$. 

\begin{lemma} In system (\ref{AV}) $x=y$ iff $z=1$.
\end{lemma}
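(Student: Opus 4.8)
The plan is to analyze the system (\ref{AV}) and extract the equivalence $x=y \iff z=1$ directly, mimicking the trick used in the proof of Proposition \ref{pk1}. First I would subtract the second equation of (\ref{AV}) from the first. Since both have the same denominator $D := 1+\theta x^k+\theta^{-1}y^k+z^k$ and the same coefficient $a$, the numerators differ only in the $\theta$-weighted terms, yielding
\begin{equation*}
x-y = \frac{a(\theta-\theta^{-1})(1-z^k)}{D}.
\end{equation*}
Next I would take the third equation of (\ref{AV}) and subtract $1$ from both sides; again the denominator is $D$ and the numerator simplifies to $(\theta^{-1}-\theta)x^k+(\theta-\theta^{-1})y^k = (\theta-\theta^{-1})(y^k-x^k)$, so
\begin{equation*}
z-1 = \frac{(\theta-\theta^{-1})(y^k-x^k)}{D}.
\end{equation*}

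From these two identities the forward implications are immediate: if $x=y$, the first identity forces $(\theta-\theta^{-1})(1-z^k)=0$; since we are in the regime $\theta\ne 1$ (indeed $\Theta<1$ is used throughout this subsection), we get $z^k=1$, hence $z=1$ because $z>0$. Conversely, if $z=1$, the second identity forces $(\theta-\theta^{-1})(y^k-x^k)=0$, hence $x^k=y^k$, hence $x=y$ by positivity. So the equivalence holds as long as $\theta\ne 1$, and the case $\theta=1$ is degenerate (then $\Theta=1$, the $\theta$-couplings disappear and the claim is trivial or vacuous in this subsection's setting).

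The only genuine subtlety — and the step I would be most careful about — is ensuring the denominator $D$ is finite and nonzero so that the divisions are legitimate: this is automatic since $x,y,z>0$ and $\theta>0$ make $D>1>0$. Beyond that the argument is purely algebraic. I would present it as: derive the two displayed difference-identities, then read off both implications using $\theta\ne 1$ and positivity of $x,y,z$. No fixed-point or monotonicity machinery is needed here; this lemma is a structural observation that will presumably be used to reduce the search for solutions of (\ref{AV}) to the invariant slice $\{x=y,\ z=1\}$ (already handled by equation (\ref{ax})) versus a complementary regime where $x\ne y$ and $z\ne 1$ simultaneously.
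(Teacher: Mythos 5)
Your proposal is correct and follows essentially the same route as the paper: both derive the two difference identities $x-y=\frac{a(\theta-\theta^{-1})(1-z^k)}{D}$ and $1-z=\frac{(\theta-\theta^{-1})(x^k-y^k)}{D}$ and read off the equivalence. The only (harmless) difference is that the paper pairs them the other way around --- it deduces $z=1$ from $x=y$ via the \emph{second} identity and $x=y$ from $z=1$ via the \emph{first} --- which makes both implications immediate without invoking $\theta\ne 1$, whereas your pairing needs the $\theta\ne 1$ caveat you correctly supply (and the case $\theta=1$ is indeed degenerate, forcing $x=y=a$ and $z=1$ outright).
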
 
 \begin{proof}
 	Rewrite (\ref{AV}) as
 \begin{equation}\label{kk}	\begin{array}{ll}
 	x-y= {a(\theta-\theta^{-1})\over  1+\theta x^k+\theta^{-1}y^k+z^k} (1-z^k)\\[2mm]
 	1-z={(\theta-\theta^{-1})\over  1+\theta x^k+\theta^{-1}y^k+z^k} (x^k-y^k)
 		\end{array}
 	\end{equation}
 If $x=y$ then from the second equation we get $z=1$. If $z=1$ then from the first equation we get $x=y$. 
 \end{proof}

{\bf Subcase:} $x\ne y$, $z\ne 1$. For simplicity we consider the case $k=2$ and from (\ref{kk}) we obtain
$$(1+\theta x^2+\theta^{-1}y^2+z^2)^2=a(\theta-\theta^{-1})^2(x+y)(1+z).$$
But this relation does not simplify the system (\ref{AV}). It still is very complicated therefore in the Table we give some numerical results.\\
\begin{center}
\begin{tabular}{|l|l|l|l|l|l|}
	\hline
$k$ & $\theta$  & $a$ & $x$ & $y$ & $z$\\
	
	\hline
		&   &  & $1.268048128$ & $1.268048128$ & $1$\\
	$2$ & $0.1$  & $2$ & $0.2005870619$ & $1.263964993$ & $0.6570348177$\\
	&   &  & $192.3741268$ & $3.052913735$ & $152.1989357$\\
	\hline
		
	&   &  & $0.7886136007$ & $0.7886136007$ & $1$\\
	$2$ & $0.1$  & $0.5$ & $0.005198204231$ & $0.7911611517$ & $0.1586966909$\\
	&   &  & $49.85366406$ & $0.3275559308$ & $63.01328616$\\
	\hline
		&   &  & $1$ & $1$ & $1$\\
	$2$ & $0.1$  & $1$ & $0.1010204092$ & $1$ & $0.1010204092$\\
	&   &  & $98.98989796$ & $1$ & $98.98989796$\\
	\hline
		$2$ & $1.3$  & $1$ & $1$ & $1$ & $1$\\
\hline
$2$ & $1.3$  & $0.5$ & $0.5376526550$ & $0.5376526550$ & $1$\\
	\hline
\end{tabular}
\end{center}

\section{Interpretations of results and their applications in Machine learning}
\subsection{Prediction of a hidden configuration based on the observed one}
For a given sequence of observed data (configuration) $\sigma = (\sigma(x) $ over vertices $x$ in the tree, one of the main mathematical problems in HMM on the Ising model is to infer the hidden states (the spin configurations) from these observations.
This can be formalized by calculating  the conditional probability $\mu(s_n | \sigma_n)$ with condition (observed) configuration $\sigma_n$. We start by considering the joint distribution \(\mu_n(s_n, \sigma_n)\) and the Hamiltonian \(H_n(s_n, \sigma_n)\). 

 Expanding the Hamiltonian, we separate terms involving \(s(x)\) and \(\sigma(x)\). The terms involving \(\sigma(x)\) are constants when conditioning on \(\sigma_n\), so they can be factored out into the normalization constant.
The remaining terms form an effective Hamiltonian for the \(s_n\) spins, which includes the interaction between \(s\) spins, the local terms \(\beta p(\sigma(x) | s(x))\), and the boundary fields \(h_{s(x), \sigma(x), x}\).

The final conditional probability is given by:

$$
\mu(s_n | \sigma_n) = \frac{1}{Z(\sigma_n)} \exp\left( \beta J \sum_{\langle x,y \rangle} s(x)s(y) + \sum_{x \in V_n} \beta p(\sigma(x) | s(x)) + \sum_{x \in W_n} h_{s(x), \sigma(x), x} \right)$$
\begin{equation}\label{ms}
=\frac{1}{Z(\sigma_n)} \prod_{\langle x,y \rangle}\theta^{ {s(x)s(y)\over 2}} \prod_{x \in V_n} \exp(\beta p(\sigma(x) | s(x))) \prod_{x \in W_n} z_{s(x), \sigma(x), x}, 
\end{equation}
where $Z(\sigma_n)$ is the normalization factor dependent on $\sigma_n$.
\begin{rk}
It follows from formula (\ref{ms}) that to define the conditional probability $\mu(s_n | \sigma_n)$, it suffices to know the conditional probabilities on each edge. Therefore, we examine these probabilities on each edge. Since the Gibbs measures we have derived above are translation-invariant, in formula (\ref{ms}), we have $z_{s(x), \sigma(x), x} = z_{s(x), \sigma(x)}$, meaning that it does not depend on the vertex $x$, but depends on the values of configurations at the vertex.
\end{rk}
Recall (see Theorem \ref{ep}):
	$$ z_{\epsilon, \delta, x}=\exp(h_{\epsilon, \delta, x}-h_{-1, -1, x}), \, 	\epsilon, \delta=-1,1.$$
Without loss of generality we assume $h_{-1, -1, x}\equiv 0$, then for each translation-invariant solution (\ref{ze}) we have 
\begin{equation}\label{he} h_{\epsilon, \delta, x}\equiv \log z_{\epsilon, \delta}, \ \ \epsilon, \delta=-1,1.
	\end{equation}
	
 Thus, on Cayley trees, belief propagation (BP) computes marginals $ \mu(s_n|\sigma_n) $ efficiently by (\ref{ms}). The observed $ \sigma $ act as fixed boundary conditions, reducing degeneracy in hidden states.  
 
 {\bf Case: $a=b=c=1$}:
 Let us illustrate this for configurations on a fixed edge $\ell_0=\langle x, y \rangle$  and for three  distinct Gibbs measures of Result 1: $\mu_0$ corresponding to $u=v=w=1$ and $\mu_i$, $i=1,2$ corresponding to solutions $(1, v_i, v_i)$ with $v_i$ given in  (\ref{k23}):
 
 {\bf Measure $\mu_0$}: In the case of Result 1 we have the condition $a=b=c=1$ that is 
 \begin{equation}\label{pb} p(-1|1)=p(1|1)=p(1|-1)=p(-1|-1)\equiv {1\over 2}.\end{equation}
 Since $u=v=w=1$ from (\ref{he}) we get 
 $$h_{\epsilon, \delta, x}\equiv 0, \ \ \epsilon, \delta=-1,1.$$
 Consequently, for fixed edge $\ell_0=\langle x, y \rangle$  we have 
\begin{equation}\label{M0} \mu_0(s_{\ell_0} | \sigma_{\ell_0})={\theta^{{1+s(x)s(y)\over 2}}\over 2(1+\theta)}.
	\end{equation}
By this formula, one can see that with respect to measure $\mu_0$ the conditional probability does not depend on the condition (observed) configurations. Moreover, hidden configuration on the end-points of edge $\ell_0$ has  equal values with probability $\theta/(1+\theta)$ and distinct values with probability $1/(1+\theta)$.
 
  {\bf Measure $\mu_1$}: In this case we have 
\begin{equation}\label{M1} \mu_1(s_{\ell_0} | \sigma_{\ell_0})={\theta^{{1+s(x)s(y)\over 2}}z_{s(x),\sigma(x)}z_{s(y),\sigma(y)}\over \sum_{\epsilon, \delta \in \{-1, 1\}}{\theta^{{1+\epsilon \delta\over 2}}z_{\epsilon,\sigma(x)}z_{\delta,\sigma(y)}}}.\end{equation}
For solution $(1,v_i, v_i)$ this conditional probability also does not depend on condition $\sigma_{\ell_0}$.
For example, 
$$\mu_1((1,1)| \sigma_{\ell_0})={\theta v_1(\theta)^2\over \theta v_1(\theta)^2+2v_1(\theta)+\theta}$$

 {\bf Measure $\mu_2$}: For the measure $\mu_2$ similarly to the case $\mu_1$ we obtain 

\begin{equation}\label{M2}\mu_2((1,1)| \sigma_{\ell_0})={\theta v_2(\theta)^2\over \theta v_2(\theta)^2+2v_2(\theta)+\theta}
	\end{equation}

\begin{figure}[h]
	\begin{center}
		\includegraphics[width=7.3cm]{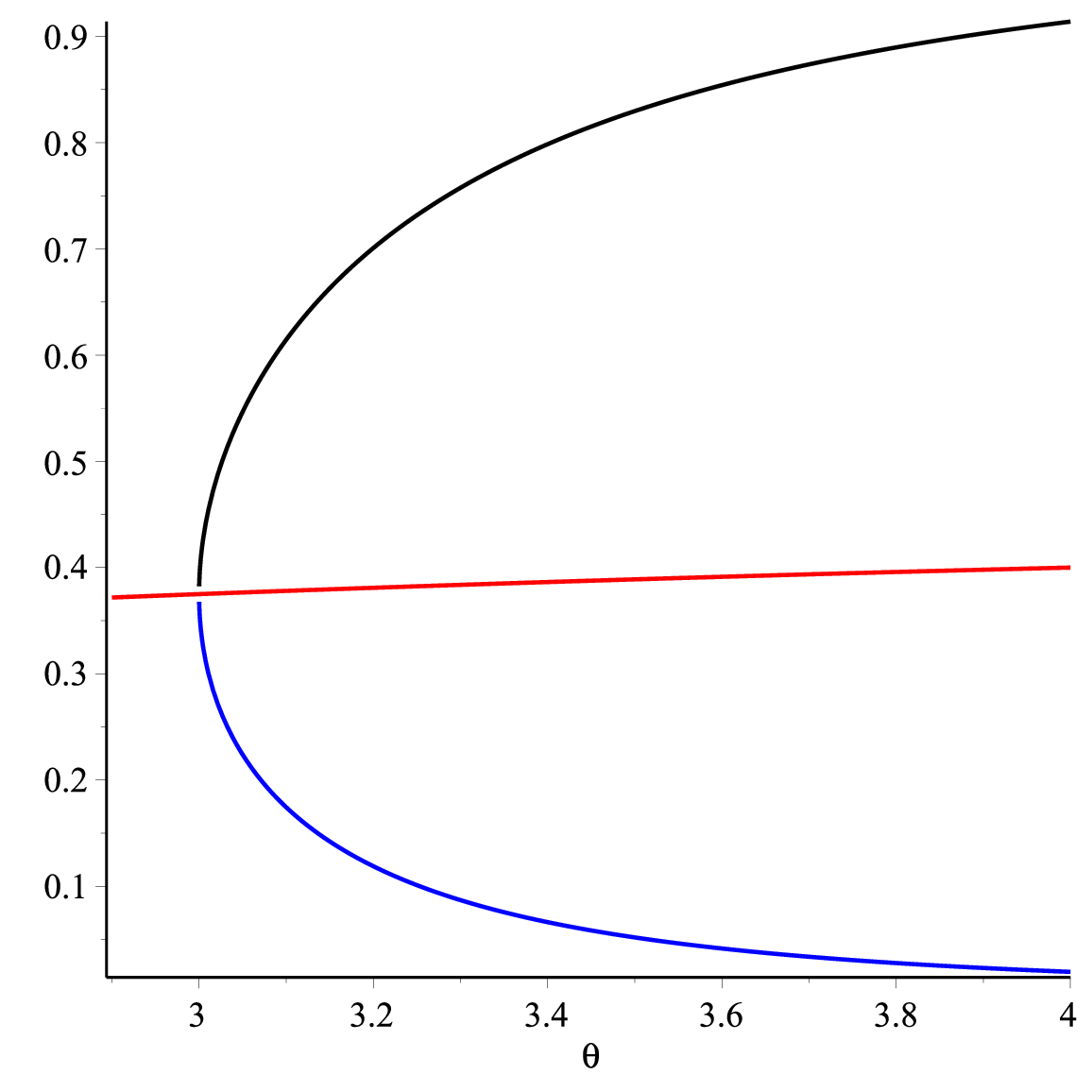}
	\end{center}
	\caption{The graph of $\mu_0((1,1)|(\sigma(x),\sigma(y)))$ (shown in the red), $\mu_1((1,1)|(\sigma(x),\sigma(y)))$ (blue) and $\mu_2((1,1)|(\sigma(x),\sigma(y)))$ (black) as functions of parameter $\theta$.}\label{mu}
\end{figure}

\begin{rk} 
	By formulas (\ref{M0})-(\ref{M2}) (see also Figure \ref{mu}), it is clear that the system described by the Hamiltonian (\ref{Ha}) with the condition (\ref{pb}) has three distinct equilibrium states:
	
	\begin{itemize}
		\item For the state corresponding to $\mu_0$, the hidden configurations at the endpoints of each edge of the tree appear with equal values with probability $\frac{\theta}{1+\theta}$ and with distinct values with probability $\frac{1}{1+\theta}$. Moreover, as $\theta \to \infty$, the hidden configuration is observed to be either all $+1$ or all $-1$, with probability $1$ for the $\mu_0$ state.
		\item For the state corresponding to $\mu_1$, the hidden configurations at the endpoints of each edge of the tree are most likely to be $+1$, with the highest probability corresponding to $\mu_1$.
		\item For the state corresponding to $\mu_2$, the hidden configurations at the endpoints of each edge of the tree are most likely to be $-1$, with the highest probability corresponding to $\mu_2$.
	\end{itemize} 
	
\end{rk}	
	
{\bf Case	$k=1$, $a=b$, $c=1$}: In this case by Proposition \ref{pk1} we have a unique Gibbs measure, denote it by $\mu^*$. 
From (\ref{ms}) we obtain (recall $u_1$ given in (\ref{uA}))
for condition $\sigma_{\ell_0}=(1,1)$:
\begin{equation}\label{ona}\mu^*((1,1)|(1,1))={\theta\over 2\theta+u_1+u_1^2}, \ \ 
\mu^*((-1,1)|(1,1))={u_1\over 2\theta+u_1+u_1^2},\end{equation}
$$\mu^*((1,-1)|(1,1))={\theta\over 2\theta+u_1+u_1^2}, \ \ 
\mu^*((-1,-1)|(1,1))={u_1^2\over 2\theta+u_1+u_1^2}.$$
and for condition $\sigma_{\ell_0}=(-1,1)$:
\begin{equation}\label{ota}\mu^*((1,1)|(-1,1))={u_1\over \theta+(1+\theta)u_1+u_1^2}, \ \ 
\mu^*((-1,1)|(-1,1))={\theta\over \theta+(1+\theta)u_1+u_1^2},\end{equation}
$$\mu^*((1,-1)|(-1,1))={u_1^2\over \theta+(1+\theta)u_1+u_1^2}, \ \
\mu^*((-1,-1)|(-1,1))={\theta u_1\over \theta+(1+\theta)u_1+u_1^2}.$$

\begin{figure}[h]
	\begin{center}
		\includegraphics[width=7.3cm]{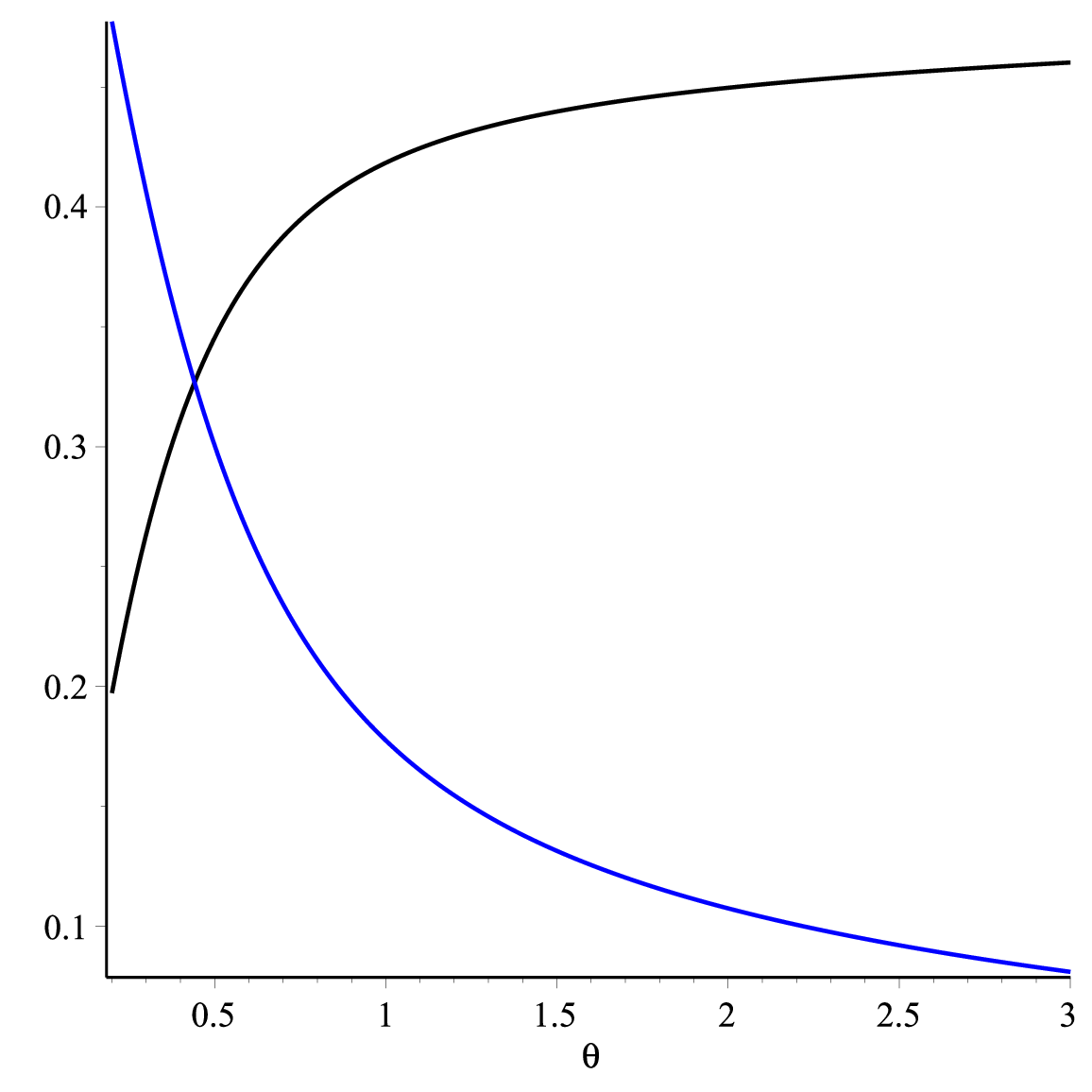}
	\end{center}
	\caption{The graph of  $\mu^*((1,1)|(1,1))$ (black) and $\mu^*((1,1)|(-1,1))$ (blue) at $a=0.3$ as functions of parameter $\theta$.}\label{muo}
\end{figure}
\begin{rk} 
From formulas (\ref{ona})-(\ref{ota}) (see also Figure \ref{muo}), it is evident that the system described by the Hamiltonian (\ref{Ha}) on a one-dimensional lattice, with the condition (\ref{lam}), has a unique equilibrium state corresponding to $u_1$. In this case, the conditional probability of predicting a hidden configuration (at the endpoints of each edge of the 1D tree) depends on the conditioned observed configuration. For instance, the hidden configuration with the highest $\mu^*$-probability coincides with the (conditioned) observed configuration.
\end{rk}	
{\bf Case	$k=2$, $a=b$, $c=1$}: In this case we consider $a=2$, $\theta=0.1$ and choose solution given in the second row of the Table: $u=x^2\approx 0.04$, $v=y^2\approx 1.6$, $w=z^2\approx 0.432$. Denote by $\mu_3$ the corresponding Gibbs measure, then for these values from (\ref{ms}) we get
$$\mu_3((1,1)|(1,1))\approx 0.347, \ \ \mu_3((-1,1)|(1,1))\approx 0.324, $$ $$ \mu_3((1,-1)|(1,1))\approx 0.324, \ \ \mu_3((-1,-1)|(1,1))\approx 0.005.$$
For condition $\sigma_{\ell_0}=(-1,1)$ we have  
$$\mu_3((1,1)|(-1,1))\approx 0.003, \ \ \mu_3((-1,1)|(-1,1))\approx 0.86, $$ $$ \mu_3((1,-1)|(-1,1))\approx 0.127, \ \ \mu_3((-1,-1)|(-1,1))\approx 0.01.$$ 

	Thus  with respect to measure $\mu_3$ the conditional probability of predicting a hidden configuration  depends on the conditioned observed configuration. For instance, the hidden configuration with the highest $\mu_3$-probability coincides with the (conditioned) observed configuration.

\subsection{ Applications to Machine Learning}

	Let us now discuss some relations of our results in machine learning:
	
	\begin{itemize} 
		
		\item In tasks where observed data (e.g., pixel intensities in images, word sequences) have intrinsic correlations, the term $ \sigma(x)\sigma(y) $ in the Hamiltonian (\ref{Ha}) allows the model to capture dependencies in the observations (as explained in the previous subsection) while inferring the hidden structure $ s $. For instance, in image denoising, the observed pixels $ \sigma $ are noisy, and the hidden spins $ s $ represent the clean image labels. The model learns to recover the true signal from noisy observations by leveraging the correlation between the noisy pixels and the hidden clean labels (e.g., \cite{Bu}).
		
		\item In weakly supervised learning, the Hamiltonian's mismatch penalty $(s(x)s(y) - \sigma(x)\sigma(y))$ enforces consistency between local predictions and global correlations in the observed data $\sigma$. This penalty term ensures that the model respects both the local relationships (captured by $s$) and the global structure in the data. It plays a critical role in training models where supervision is limited (e.g., \cite{Zh}).
		
		\item In graphical model learning, the couplings $J$ and emission parameters (embedded in $ p(\sigma|s) $) via contrastive divergence are essential for controlling the dynamics of the system related to the model. The penalty $(s(x)s(y) - \sigma(x)\sigma(y))$ depends on the sign of $J$. Specifically, for $ J > 0 $, the model penalizes configurations where $ s(x)s(y) < \sigma(x)\sigma(y) $, while for $ J < 0 $, the opposite holds, meaning the model favors configurations where $ s(x)s(y) > \sigma(x)\sigma(y) $. The tree structure allows for exact calculations of critical parameters (e.g., \cite{Wa}).
		
		\item In anomaly detection, outliers are detected by identifying configurations where $ s(x)s(y) $ deviates significantly from $ \sigma(x)\sigma(y) $. Such mismatches signal discrepancies between expected and observed correlations, which often correspond to unusual or anomalous data points (e.g., \cite{Ch}).
		
		\item Belief Propagation: On tree-like structures, belief propagation (or the sum-product algorithm) is exact. This means that the messages passed between neighboring vertices $ x $ and $ y $ in the tree structure can be used to compute marginals of the spin configuration at each vertex. Belief propagation updates the beliefs about the state of each spin based on the observed data and the messages received from neighboring vertices. The algorithm for the Ising model on a Cayley tree iterates over the tree, updating the probability of the spin at each vertex based on its neighbors:
		$$ \mu_{x \to y}(s(x)) = \sum_{s(y)} p(s(x) | s(y)) p(\sigma(x) | s(x)) \prod_{z\in \mathcal{N}(x) \setminus y} \mu_{z\to x}(s(z)), $$
		where $\mathcal{N}(x)$ is the set of all neighbors of $x$.
		
		This iterative process continues until convergence, and the marginal distribution at each vertex provides an estimate for the hidden spin configuration (e.g., \cite{Pe}, \cite{Ye}).
		
	\end{itemize}

	\section*{Data availability statements}
	The datasets generated during and/or analysed during the 
	current study are available from the corresponding author (U.A.Rozikov) on reasonable request.
	
	\section*{Conflicts of interest} The authors declare no conflicts of interest.

	\section*{ Acknowledgements}
	
	The authors gratefully acknowledge the University of Granada for	awarding the Visiting Scholar Grant (PPVS2024.04) to U.A. Rozikov. He also expresses his sincere gratitude to the DaSCI Institute and IMAG for their kind invitation and support during his academic visit.  \\
	We thank both reviewers for the critical and detailed evaluation. Especially in relation to the notion of a ``hidden Markov model.''

	\end{document}